\documentclass[10pt,twocolumn,letterpaper]{article}

\usepackage[pagenumbers]{cvpr}
\usepackage{amsthm}
\usepackage{tabularx}
\usepackage{adjustbox}
\newtheorem{definition}
{Definition}
\newtheorem{theorem}{Theorem}
\usepackage{multirow}
\usepackage{tabularx,array}
\newcolumntype{Y}{>{\centering\arraybackslash}X}

\usepackage{amsthm}

\definecolor{cvprblue}{rgb}{0.21,0.49,0.74}
\usepackage[pagebackref,breaklinks,colorlinks,allcolors=cvprblue]{hyperref}

\def\paperID{*****} 
\def\confName{CVPR}
\def\confYear{2026}

\title{Who Drives the Probability Game of VLMs?\\ A Temporal Causal Drive Evaluation Framework}

\author{
Shuyao Xiao$^{1,2}$\thanks{This work was done during an internship at Kuaishou Technology.} \quad
Shengling Wang$^{1}$\thanks{Corresponding author.} \quad
Haoyu Niu$^{2}$ \quad
Ke Chao$^{1}$ \\
Changwei Xu$^{2}$ \quad
Xinran Duan$^{1}$ \quad
Chaoyong Jiang$^{1}$ \\[4pt]
$^{1}$College of Artificial Intelligence, Beijing Normal University \\
$^{2}$Kuaishou Technology \\[2pt]
{\tt\small xiaoshuyao@mail.bnu.edu.cn, wangshengling@bnu.edu.cn,
niuhaoyu@kuaishou.com
}
}

\begin{document}
\maketitle
\begin{abstract}
Vision-language models (VLMs) are increasingly evaluated on complex image and video understanding tasks, yet conventional metrics primarily assess final-answer quality and reveal little about how different information sources shape the generation process. We propose a causal and temporal evaluation framework that traces the evolving roles of visual input, question text, and generated prefixes during autoregressive decoding. Grounded in a Structural Causal Model, we use interventions and backdoor adjustment to derive three step-indexed causal-drive metrics---Visual Causal Drive (VCD), Question Causal Drive (QCD), and Prefix Causal Drive (PCD)---for characterizing source-specific generation patterns without requiring reference answers. Experiments on Qwen3-VL-8B-Instruct across MAVIS, LLaVA-Video-178K, and MiraData, together with cross-model validation on InternVL2-8B, reveal a consistent transition from stronger early question and visual guidance toward increasing reliance on generated prefixes. Randomized-intervention validation shows that QCD and PCD reduce recovery error over observational PMI baselines by 34.8\% and 47.1\%, respectively. On VLMBias, the prefix--visual imbalance score achieves 0.767 AUROC and 0.873 AUPRC for distinguishing prior-driven from visually grounded generations. These results show that causal-drive trajectories provide complementary source-level diagnostics for multimodal generation.
\end{abstract}    
\section{Introduction}
\label{sec:intro}

With the rapid advancement of vision-language models (VLMs)~\cite{yin2024survey} in image question answering~\cite{antol2015vqa}, video understanding~\cite{fang2024mmbench}, and complex reasoning~\cite{lu2022learn}, it has become crucial to design evaluation frameworks that accurately reflect their multimodal understanding and generation capabilities~\cite{li2025benchmark}. Existing evaluation paradigms rely on result-oriented metrics, like BLEU~\citep{papineni2002bleu}, CIDEr~\citep{vedantam2015cider}, and BERTScore~\cite{zhang2019bertscore}, which assess lexical or semantic consistency between generated outputs and reference answers.

However, final-answer correctness alone does not establish the reliability of the underlying reasoning~\cite{chen2024we}. Result-oriented evaluation may reward answer matching without revealing the intermediate evidence used by the model~\cite{prasad2023receval}. This limitation has motivated process-oriented analyses of intermediate reasoning steps and traces~\cite{lightman2024lets,xia2025evaluating,lee2025evaluating}.

Although process-oriented evaluation has begun to decompose model generation, existing methods remain largely centered on textual reasoning chains, focusing on step-level correctness and informativeness~\cite{prasad2023receval}, coherence and trace-level quality~\cite{lee2025evaluating}, reasoning efficiency~\cite{li2025think}, or confidence trajectories~\cite{mao2025temporalizing}. In multimodal settings, these metrics cannot adequately distinguish the distinct roles played by visual information, question text, and model priors \cite{chen2024we, amit2025quantifying}. For VLMs, genuine process understanding requires knowing not only how the model makes inferences, but also what evidence it relies on during inference. Without tracing source dominance during generation, it is difficult to identify whether an error arises from weak visual grounding, question misunderstanding, or prefix-induced error accumulation~\cite{asadi2026mirage}. Therefore, evaluation should not only examine whether the reasoning chain is plausible, but also trace which information source drives the generation process, which is an aspect that remains insufficiently explored in existing multimodal evaluation.

Moreover, although recent studies have begun to investigate modality contribution, existing approaches remain predominantly static. They may estimate contributions through information decomposition~\cite{amit2025quantifying} or relative entropy~\cite{jin2026inference}, but typically yield only global importance scores \cite{neo2024towards} and cannot characterize how modality dominance evolves during autoregressive generation. Consequently, when hallucinated reasoning occurs, these methods provide limited support for localizing its source or pinpointing when the model begins to over-rely on irrelevant modalities or language priors \cite{favero2024multi, zhou2025mitigating}.

To solve this problem, it is necessary to dynamically decompose the influence of different information sources in the whole generation process from a process-oriented perspective. However, there are two core challenges. First, surface correlation and deep causal relations are tightly entangled \cite{pearl2009causality}. In VLM generation, various information sources interact and depend on each other in complex ways, making it challenging for conventional methods to characterize source-specific generation patterns beyond observational correlations \citep{zhou2025mitigating}. Second, a dynamic analysis framework tailored for autoregressive generation is lacking. Existing methods, primarily designed for static and global features, are unsuitable for the temporal evolution of VLM generation. These methods mainly provide static or global estimates of modality contribution, making them less suited for temporally aware multi-factor modeling and dynamic source attribution during autoregressive generation~\cite{liang2023quantifying,amit2025quantifying,jin2026inference}.

To overcome these challenges, we utilize the Structural Causal Model (SCM) \cite{pearl2010causal} and formulate VLM answer generation as a temporal autoregressive process influenced by the question, visual input, and generated prefix \cite{yin2024survey, xiong2025autoregressive}. We employ interventions and backdoor adjustment to derive source-specific interventional quantities and construct three step-indexed causal metrics: Prefix Causal Drive (PCD), Visual Causal Drive (VCD), and Question Causal Drive (QCD). By tracing their trajectories during decoding, our framework characterizes how the roles of generation history, visual evidence, and question information evolve over time.
Our contributions are summarized as follows:

\textbf{(1)} We propose a causal and temporal evaluation framework for VLM generation, formalizing autoregressive decoding as a dynamic process driven by visual input, question text, and historical prefixes. Grounded in the SCM, we construct three causal drive metrics to characterize source-specific generation patterns through interventional analysis.

\textbf{(2)} We investigate how the roles of different information sources evolve throughout generation. The framework reveals process-level behaviors that are hard to capture with conventional outcome-based metrics, providing insights for diagnosing prefix over-reliance, insufficient visual grounding, and visual suppression.

\section{Related Work}
\label{sec:related works}

Existing evaluation methods for VLMs remain largely outcome-oriented, treating model output as a final product and measuring its consistency with reference answers or human annotations. Common metrics for open-ended generation include BLEU\cite{papineni2002bleu}, ROUGE\cite{lin2004rouge}, METEOR\cite{banerjee2005meteor}, CIDEr\cite{vedantam2015cider}, SPICE\cite{anderson2016spice}, and BERTScore\cite{zhang2019bertscore}, while question answering and multiple-choice tasks are typically assessed by task accuracy\cite{antol2015vqa, fu2023mme, liu2024mmbench, yue2024mmmu}. These metrics enable standardized comparisons and indicate whether a model produces a correct answer, but they offer limited insight into what evidence or information supports that answer.

To address this limitation, recent work has begun to explore process-oriented evaluation, examining reasoning-chain quality, intermediate reasoning steps, confidence trajectories, and visual grounding. For instance, ReCEval\cite{prasad2023receval} assesses the correctness and informativeness of reasoning steps; Temporalizing Confidence\cite{mao2025temporalizing} models confidence evolution during reasoning; and THINK-Bench\cite{li2025think} evaluates reasoning efficiency and chain-of-thought quality. In multimodal settings, other studies analyze visual evidence support~\cite{xiao2024can}, visual hallucination~\cite{wu2025grounded}, or internal visual information processing~\cite{neo2024towards}. 
Although these methods shift attention from final answers to generation processes, most still examine whether explicit reasoning chains are valid or whether answers are grounded in visual evidence, without directly characterizing the dynamic roles of different information sources during generation.

Another line of work quantifies multimodal contributions by using partial information decomposition to characterize unique, redundant, and synergistic information \cite{liang2023quantifying}, or by estimating modality importance via relative entropy \cite{jin2026inference} or information gain\cite{amit2025quantifying}. However, these approaches typically provide representation-level, sample-level, or global contribution estimates, making them unable to capture the temporal evolution of information-source dominance in autoregressive generation. Therefore, a process-oriented evaluation is needed to track generation trajectories, capture how source dominance shifts over time, and distinguish causal drives from surface correlations.

\section{Temporal Causal Framework}

We model answer generation in multimodal question answering as a temporal autoregressive process. 
The image (or video) and question text are first tokenized into sequences \cite{xiong2025autoregressive}. 
Let $V$ denote the visual token sequence and $Q$ denote the question token sequence. 
Given the visual evidence and the question, the model generates the answer autoregressively \cite{alayrac2022flamingo}. 
We use $A_{\le t}$ to denote the complete answer sequence up to step $t$, with $A_t$ denoting the current token at step $t$.
Thus, the generation process can be viewed as a sequential process where the answer  recursively grows until an end-of-sequence token is produced or a maximum length is reached.

\begin{figure}
    \centering
    \includegraphics[width=0.85\linewidth]{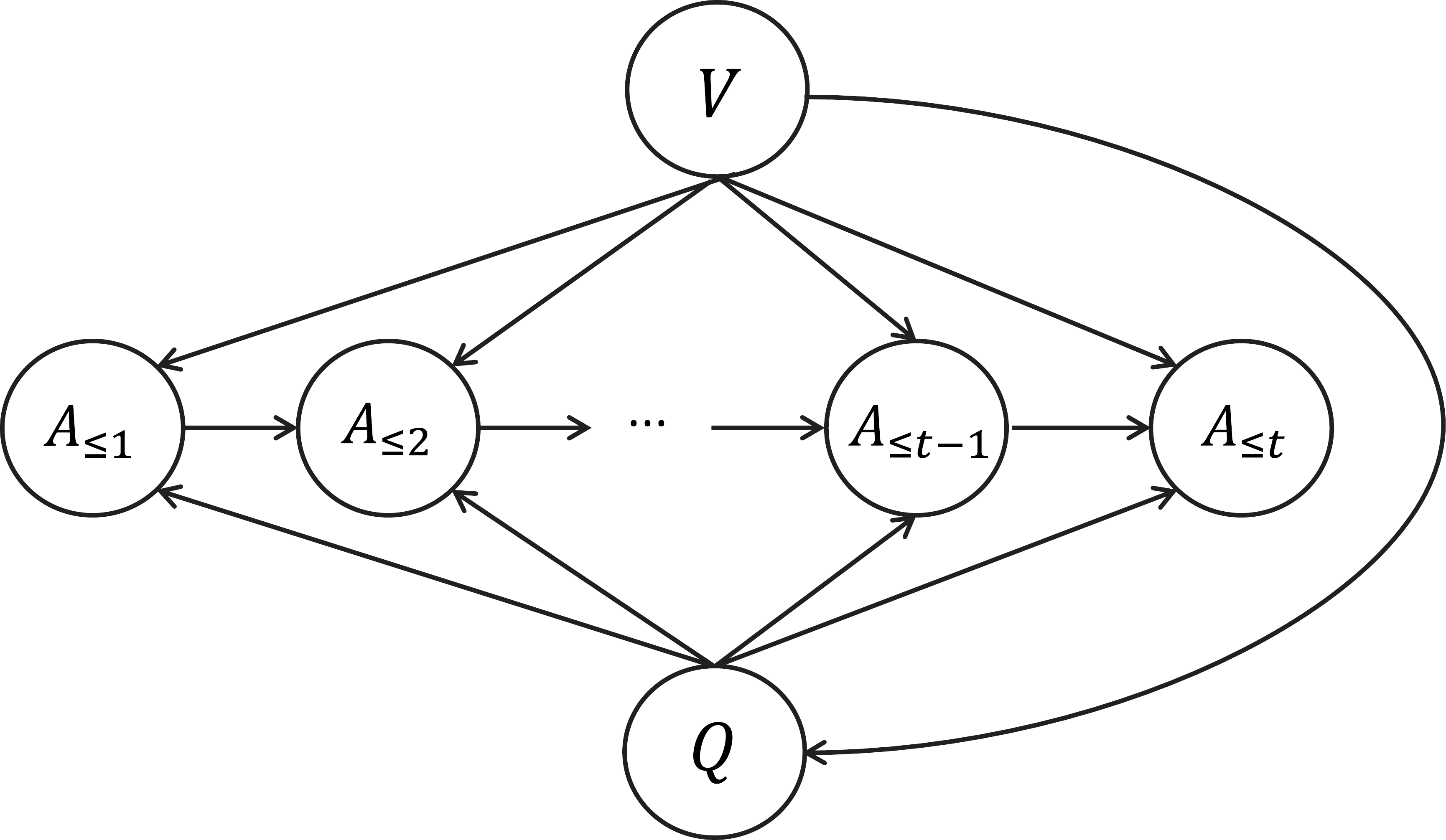}
    \caption{Structural causal graph of VLM generation. Here, $V$ denotes the visual input, $Q$ denotes the question or prompt, and $A_{\le t}$ denotes the generated answer at token step $t$.}
    \label{fig:scm}
    
\end{figure}

We formalize this process as the structural causal graph shown in Figure~\ref{fig:scm}. 
In this graph, the relation $A_{<t} \rightarrow A_t$ represents the autoregressive propagation of the answer over time; each current token is constrained by the existing prefix.
The edges
$Q \rightarrow \{A_{\le 1}, A_{\le 2}, \ldots, A_{\le t}\}$
indicate that the question continuously guides the semantic objective throughout generation, while
$V \rightarrow \{A_{\le 1}, A_{\le 2}, \ldots, A_{\le t}\}$
indicates that visual evidence provides external grounding for answer formation. 
In visual question answering (VQA) tasks, visual content often determines which questions are meaningful and likely to be asked. 
Therefore, questions are typically not independent of the visual input but are constructed around the given image or video. 
Thus, we introduce the edge $V \rightarrow Q$ to capture the dependence between visual input and the question.

In this graph, the current token $A_t$ is affected by multiple causal paths.
The first type consists of direct paths: $A_{<t} \rightarrow A_t$, $V \rightarrow A_t$, and $Q \rightarrow A_t$.
The second type includes confounding paths. For the prefix effect, $A_{<t}$ and $A_t$ share the upstream variables $Q$ and $V$, inducing the backdoor paths
$A_{<t}\leftarrow Q\rightarrow A_t$
and
$A_{<t}\leftarrow V\rightarrow A_t$ \cite{pearl2009causality}.
For the question effect, the path \(Q \leftarrow V \rightarrow A_t\) creates confounding because \(V\) affects both $Q$ and $A_t$, so the observed effect of $Q$ on $A_t$ is also biased.
As a result, the observed relations $A_{<t}\rightarrow A_t$ and $Q\rightarrow A_t$ do not by themselves identify the corresponding interventional effects on the current token.
Directly interpreting changes in conditional probabilities as the causal contribution of a given source conflates statistical correlation with causal influence.

A straightforward alternative is to estimate source effects by repeatedly ablating inputs and regenerating outputs. However, such counterfactual decoding is costly, unstable, and difficult to scale in large-generation settings \cite{feder2021causalm}. 
We therefore adopt the SCM to analyze the generation process. 
To distinguish causal influence from statistical association, we introduce the do-operator within the SCM framework \cite{pearl2009causality}. 
The do-operator represents an intervention that fixes a variable to a specified value and cuts off its incoming causal dependencies, allowing us to block backdoor paths analytically. 
This enables us to estimate target causal effects by teacher-forced scoring of the same generated sequence under different source conditions, without altering the original generation trajectory.

\label{subsec:causal effects}
Based on the SCM, we derive causal effects to quantify source-level drives in generation. 
Each effect captures the change in output probability from interventions, rather than just statistical association \cite{mlodozeniec2025position}. In the following expression, uppercase letters such as $Q$ and $V$ denote random variables, while lowercase letters such as $q$ and $v$ denote their observed values.
The adjustment distributions are estimated from the empirical evaluation data, so the resulting effects should be interpreted for the benchmark population under evaluation.

We first consider the generated prefix \(A_{<t}\), whose influence on the current token \(A_t\) may be confounded by both the question \(Q\) and visual input \(V\).
According to the backdoor criterion \cite{pearl2009causality}, after intervening on the prefix \(do(A_{<t}=a_{<t})\), we obtain:
\begin{equation}
\begin{aligned}
&P(A_t=a_t \mid do(A_{<t}=a_{<t})) \\
&=
\sum_{v,q} P(V=v)P(Q=q\mid V=v) \\
&\quad\cdot
P(A_t=a_t\mid A_{<t}=a_{<t},V=v,Q=q).
\end{aligned}
\label{eq:prefix_effect}
\end{equation}
For the question effect, the intervention \(do(Q=q)\) cuts off the incoming edge \(V \rightarrow Q\), blocking the backdoor path via $V$. 
Since \(V\) confounds the relationship between \(Q\) and \(A_{\le t}\), we perform backdoor adjustment \cite{pearl2010causal} on the empirical distribution of \(V\) to block the backdoor path \(Q \leftarrow V \rightarrow A_{\le t}\). The causal effect of question $Q$ on $A_{\le t}$ is
\begin{equation}
\begin{aligned}
&P(A_{\le t}=a_{\le t} \mid do(Q=q)) \\
&= 
\sum_v
P(A_{\le t}=a_{\le t} \mid V=v, Q=q)\, P(V=v).
\label{eq:question_effect}
\end{aligned}
\end{equation}
To characterize the contribution of visual evidence independently of question information, we evaluate the visual intervention while fixing the question to a null prompt:
\begin{equation}
\begin{aligned}
&P(A_{\le t}=a_{\le t}
\mid do(V=v),do(Q=\varnothing))\\
&=
P(A_{\le t}=a_{\le t}
\mid V=v,Q=\varnothing).
\end{aligned}
\label{eq:visual_effect}
\end{equation}
This intervention isolates the visual support available for generating $A_{\le t}$ when question information is absent. The equality in Eq.~\eqref{eq:visual_effect} follows because \(V\) is a root node in the assumed SCM; in contrast, QCD and PCD require backdoor adjustment, which distinguishes their interventional estimands from ordinary observational conditioning.
Detailed proofs of the causal effects above are provided in Supplementary Sec.~\ref{proofs of causal effects}.

\section{Causal Drive Metrics}
Building on the causal effects derived in the preceding section, we introduce a unified causal drive framework for tracing how different information sources shape autoregressive VLM generation. We define three causal drive metrics to measure the source-specific drives of the generated prefix, question input, and visual evidence on the generated answer.
To calibrate the source-specific interventional quantities along the decoding trajectory, we introduce a shared null-source reference:
\begin{equation}
P_0(A_{\le t}=a_{\le t})
=
P(A_{\le t}=a_{\le t}
\mid V=\varnothing,Q=\varnothing).
\label{eq:null_reference}
\end{equation}
The resulting causal-drive scores are calibrated relative to this shared null-source likelihood. This reference provides a common baseline for fixed target sequences and
calibrates the scale of the causal-drive scores. We provide reference and
prompt sensitivity analyses in Supplementary Sec.~\ref{sec:reference_sensitivity}.

\textbf{Prefix Causal Drive (PCD)}
characterizes the influence of the observed generation history on the token produced at decoding step \(t\). It combines the interventional probability obtained by fixing the historical prefix with the shared null-source reference.

\begin{definition}
The PCD at decoding step \(t\) is defined as:
\begin{align}
&\mathrm{PCD}_t
=
\log_2
\frac{
P(A_t=a_t\mid do(A_{<t}=a_{<t}))
}{
P_0(A_{\le t}=a_{\le t})
}
\label{eq:pcd_do}\\
&=
\log_2
\frac{
\begin{aligned}
&\sum_{v,q} P(V=v)P(Q=q\mid V=v) \\
&\quad \times
P(A_t=a_t\mid A_{<t}=a_{<t},V=v,Q=q)
\end{aligned}
}{
P_0(A_{\le t}=a_{\le t})
}.
\label{eq:pcd_adjust}
\end{align}
\end{definition}
Under the intervention $do(A_{<t}=a_{<t})$, the event $A_{<t}=a_{<t}$ has probability one. Therefore, the numerator in Eq.~\eqref{eq:pcd_do} is equivalently $P(A_{\le t}=a_{\le t}\mid do(A_{<t}=a_{<t}))$, making it comparable to the sequence-level null reference while preserving the token-level definition.
The numerator captures the probability of the current token given the fixed observed generation history, while marginalizing over the visual and question contexts. Evaluating PCD over successive decoding steps yields a temporal profile of prefix-conditioned generation during autoregressive decoding. By substituting Eq.~\eqref{eq:prefix_effect} into Eq.~\eqref{eq:pcd_do}, we obtain the computable form in Eq.~\eqref{eq:pcd_adjust}.

\textbf{Question Causal Drive (QCD)} is introduced to quantify the causal driving effect of the question \(Q\) on the current answer \(A_{\le t}\). It reflects the extent to which the generated content is driven by the question.
\begin{definition}
The QCD at token step \(t\) is defined as:
\begin{align}
&\mathrm{QCD}(A_{\le t}=a_{\le t}\mid Q=q) \\
&= \log_2 \frac{P(A_{\le t}=a_{\le t}\mid do(Q=q))}{P_0(A_{\le t}=a_{\le t})}
\label{eq:qcd_do} \\
&= \log_2\left[
\sum_v P(V=v)\,P(A_1=a_1\mid V=v,Q=q) \right. \\
&\quad \left.\cdot \prod_{i=2}^{t} P(A_i=a_i\mid A_{<i}=a_{<i}, V=v, Q=q)
\right] \\
&- \log_2 P_0(A_{\le t}=a_{\le t}),
\label{eq:qcd_chain}
\end{align}
\end{definition}
Inserting Eq.~\eqref{eq:question_effect} into Eq.~\eqref{eq:qcd_do} and decomposing the interventional probability using the autoregressive chain rule \cite{murphy2012machine,bengio2003neural} yields the computable expression in Eq.~\eqref{eq:qcd_chain}.

\textbf{Visual Causal Drive (VCD)}
quantifies the contribution of visual evidence to the generated answer while the question is fixed to the null reference.

\begin{definition}
The VCD at decoding step \(t\) is defined as:
\begin{align}
&\mathrm{VCD}(A_{\le t}=a_{\le t}\mid V=v) \\
&=
\log_2
\frac{
P(A_{\le t}=a_{\le t}
\mid do(V=v),do(Q=\varnothing))
}{
P_0(A_{\le t}=a_{\le t})
}
\label{eq:vcd_do}\\
&=
\log_2
\frac{
\begin{aligned}
&P(A_1=a_1\mid V=v,Q=\varnothing)\\
&\quad\times
\prod_{i=2}^{t}
P(A_i=a_i\mid A_{<i}=a_{<i},
V=v,Q=\varnothing)
\end{aligned}
}{
P_0(A_{\le t}=a_{\le t})
}.
\label{eq:vcd_chain}
\end{align}
\end{definition}

By substituting Eq.~\eqref{eq:visual_effect} into Eq.~\eqref{eq:vcd_do} and applying the autoregressive chain rule, we obtain Eq.~\eqref{eq:vcd_chain}. VCD traces the support provided by visual evidence throughout generation when question information is replaced by the null reference.
Detailed proofs are in Supplementary Sec.~\ref{proof of causal drives}. Computational complexity, finite-support estimation, and resource overhead are discussed in Supplementary Sec.~\ref{compute resources}.

In summary, PCD, QCD, and VCD provide three step-indexed causal-drive metrics for tracing the roles of generation history, question information, and visual evidence throughout autoregressive decoding. Their temporal trajectories characterize how source-specific generation patterns evolve over the course of generation, providing process-level diagnostics beyond final-answer evaluation.

\section{Experimental Evaluation}
\label{experiment}

\begin{table}[ht]
\centering
\caption{Datasets overview.}
\label{tab:dataset_overview}
\small
\setlength{\tabcolsep}{6pt}
\renewcommand{\arraystretch}{1.1}

\begin{tabular}{l l c}
\toprule
\textbf{Task Type}
& \textbf{Dataset}
& \textbf{Samples} \\
\midrule

Image-text VQA
& MAVIS~\citep{song2026mavis}
& 500 \\

Supervised video QA
& LLaVA-Video-178K~\citep{zhang2410video}
& 100 \\

Open-ended video QA
& MiraData~\citep{ju2024miradata}
& 500 \\

\bottomrule
\end{tabular}

\vspace{-0.5em}
\end{table}

In this section, we evaluate our framework on Qwen3-VL-8B-Instruct~\citep{bai2025qwen3} and InternVL2-8B~\citep{chen2024internvl}. Table~\ref{tab:dataset_overview} summarizes the datasets and generation settings. For open-ended video QA, we use the first 500 MiraData samples with manually designed questions for reference-free analysis. The marginalization weights $P(V)$ and $P(Q\mid V)$ are estimated empirically from the evaluation data. Unless otherwise specified, the corresponding marginalizations are approximated using an empirical support size of $K=20$. The main experiments are conducted on 8 NVIDIA RTX 4090 GPUs with 24 GB of memory each; the large-support randomized-intervention validation is parallelized on 8 NVIDIA H200 GPUs for efficiency.

\subsection{Temporal Transition of Causal Drives}
This section examines how various information sources drive VLM generation across token positions. By tracking QCD, VCD, and PCD along normalized decoding positions across models, we analyze how the roles of the question, visual input, and generated prefix evolve during the generation process.
\begin{figure}[h]
    \centering
    \begin{subfigure}[t]{0.48\linewidth}
        \centering
        \includegraphics[width=\linewidth]{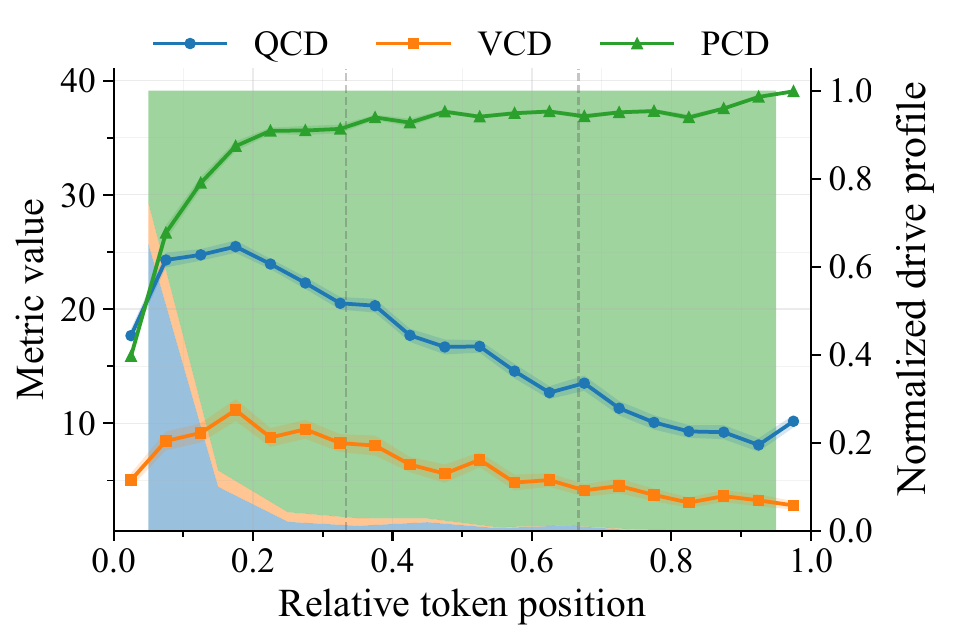}
        \caption{Qwen3-VL-8B-Instruct with MiraData.}
        \label{fig:qwen3_miradata}
    \end{subfigure}
    \hfill
    \begin{subfigure}[t]{0.48\linewidth}
        \centering
        \includegraphics[width=\linewidth]{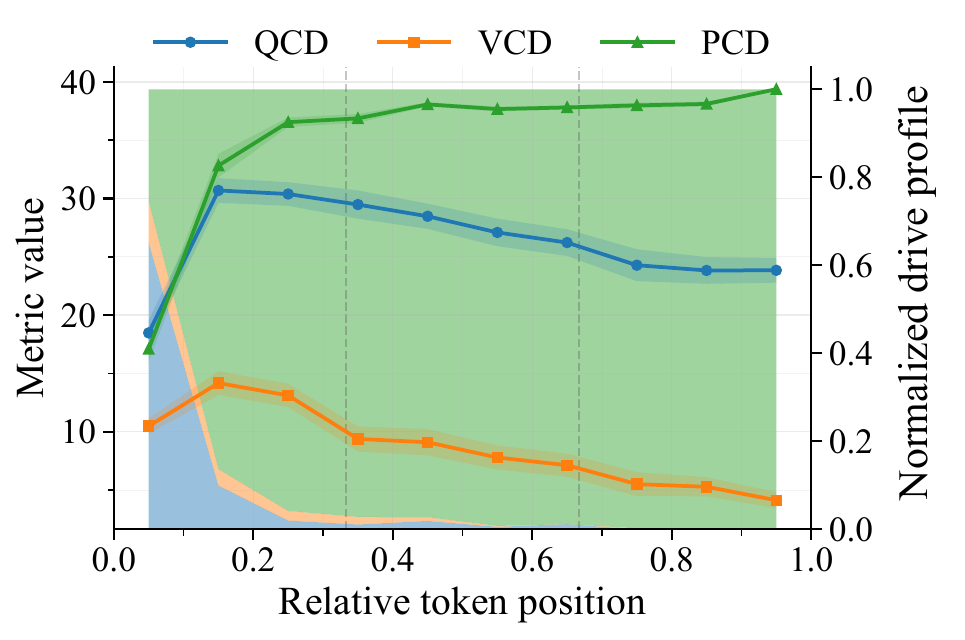}
        \caption{Qwen3-VL-8B-Instruct with LLaVA-Video-178K.}
        \label{fig:internvl2_llava}
    \end{subfigure}
    \\[10pt]  
    \begin{subfigure}[t]{0.48\linewidth}
        \centering
        \includegraphics[width=\linewidth]{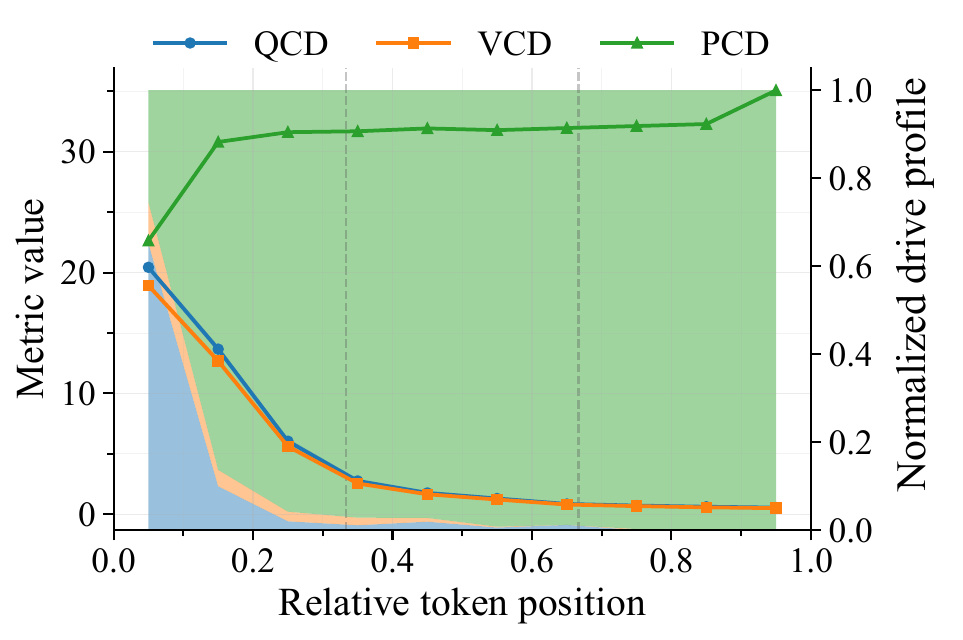}
        \caption{Qwen3-VL-8B-Instruct with MAVIS.}
        \label{fig:qwen3_mavis}
    \end{subfigure}
    \hfill
    \begin{subfigure}[t]{0.48\linewidth}
        \centering
        \includegraphics[width=\linewidth]{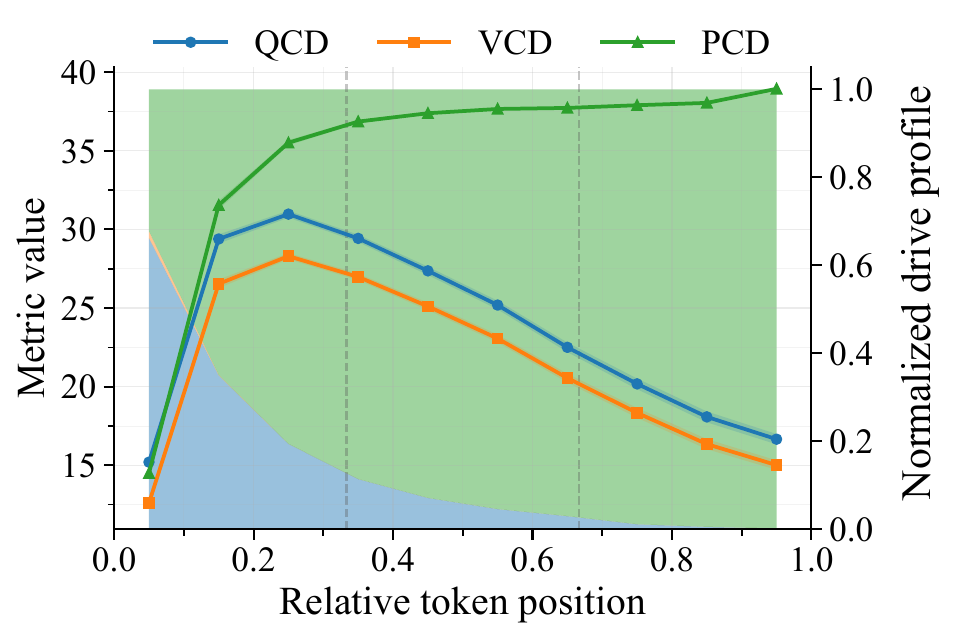}
        \caption{InternVL2-8B with MAVIS.}
    \end{subfigure}
    \vspace{-0.3em}
    \caption{Dynamic trajectories and normalized drive profiles of causal drives.
    The x-axis denotes the relative decoding position, the curves show the three causal-drive scores, and the stacked background areas visualize their normalized
    profiles.}
    \vspace{-1.2em}
    \label{fig:combined}
\end{figure}

From Figure~\ref{fig:combined}, we observe consistent temporal patterns across models and datasets. PCD increases rapidly during early decoding and remains high at later positions, indicating stronger prefix-conditioned generation as the answer history grows. In contrast, QCD is strongest early on and gradually decreases, showing that question information provides stronger guidance at the start of generation. VCD also tends to decrease over time, suggesting that visual evidence is more prominent during earlier decoding stages.

We further observe dataset- and model-specific differences. For Qwen3-VL-8B-Instruct on MAVIS, QCD and VCD decrease rapidly while PCD increases sharply, indicating a fast transition toward stronger dependence on the generated history. On MiraData and LLaVA-Video-178K, QCD remains influential longer, suggesting more sustained question guidance in video QA. InternVL2-8B on MAVIS shows stronger and more persistent QCD and VCD trajectories, particularly during early-to-middle decoding. Overall, the three causal-drive trajectories reveal a recurring transition from stronger early question and visual guidance to increasing reliance on the generated prefix.

\subsection{Clustering Generation Patterns via Causal Drive Trajectories}
\begin{figure}[t]
  \centering

  \begin{subfigure}[t]{0.49\linewidth}
    \centering
    \includegraphics[width=\linewidth]{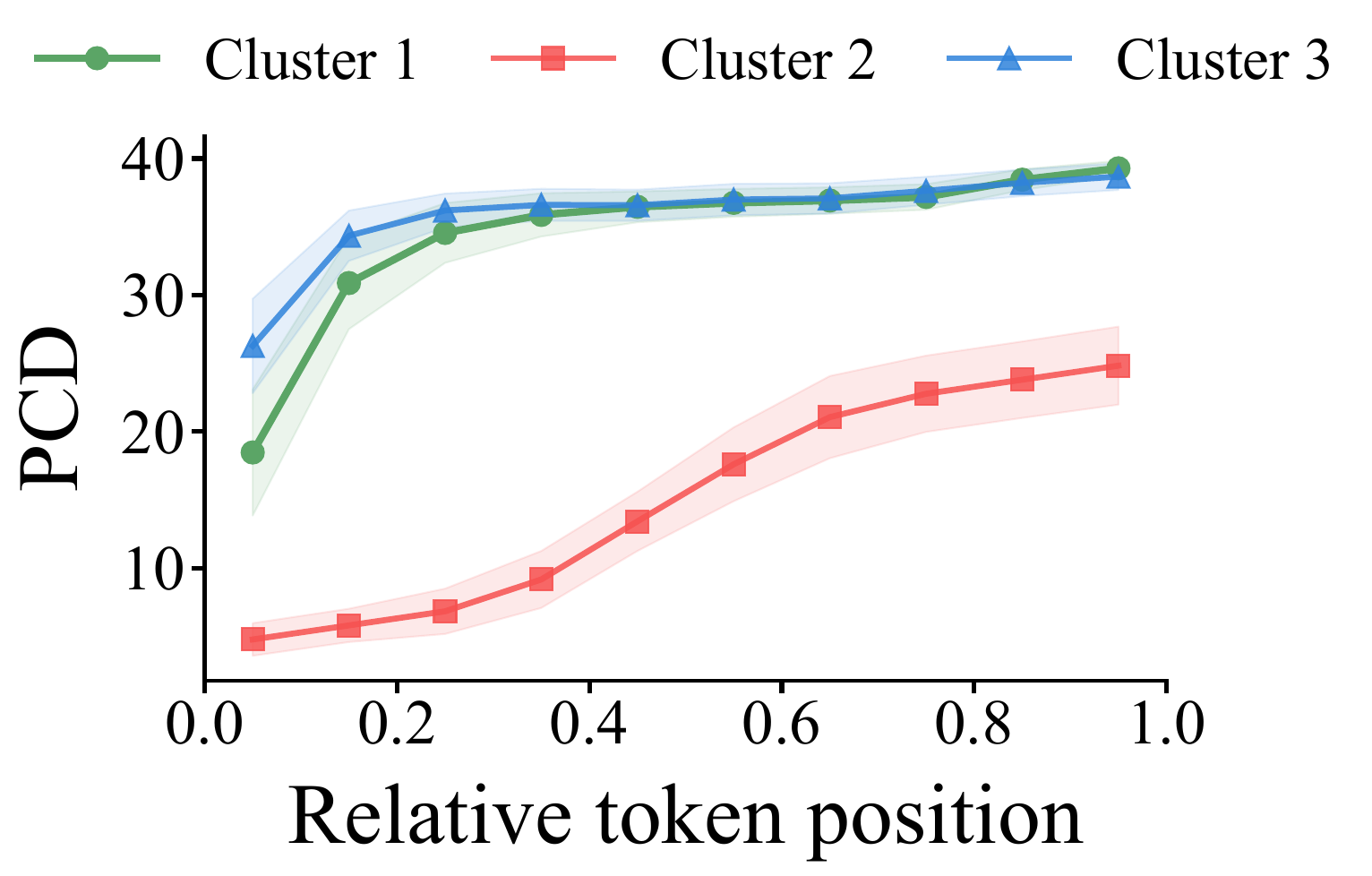}
    \caption{PCD.}
    \label{fig:pcd_traj}
  \end{subfigure}
  \hfill
  \begin{subfigure}[t]{0.49\linewidth}
    \centering
    \includegraphics[width=\linewidth]{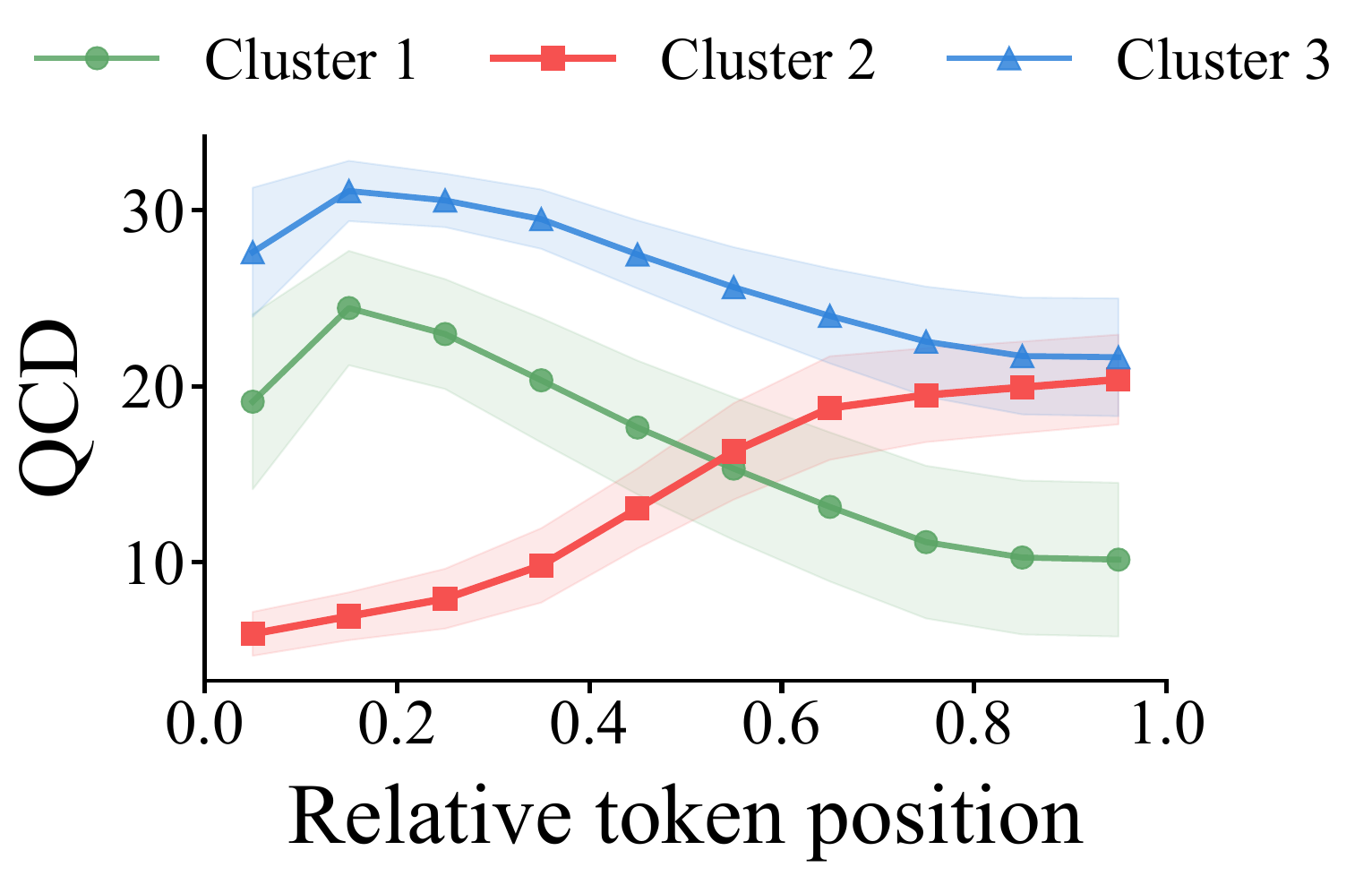}
    \caption{QCD.}
    \label{fig:qcd_traj}
  \end{subfigure}

  \vspace{-0.3em}

  \begin{subfigure}[t]{0.49\linewidth}
    \centering
    \includegraphics[width=\linewidth]{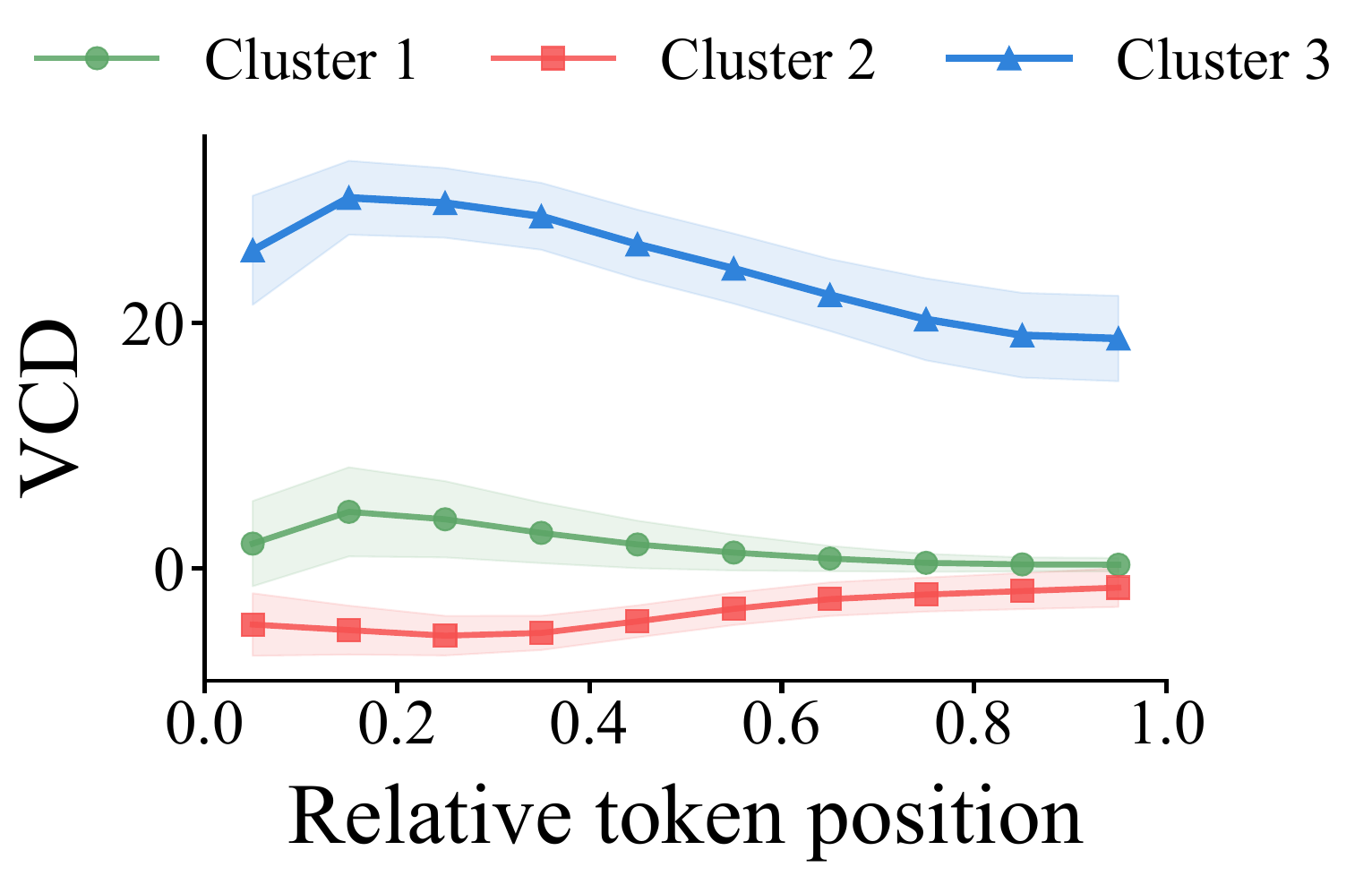}
    \caption{VCD.}
    \label{fig:vcd_traj}
  \end{subfigure}

  \vspace{-0.4em}
  \caption{Step-indexed causal drive trajectories across three clustering groups.}
  \label{fig:causal_trajectories}
  \vspace{-0.8em}
\end{figure}

We further cluster causal-drive trajectories to identify sample-level generation patterns beyond average trends.
This analysis reveals distinct source-reliance patterns across generation cases.
As shown in Figure~\ref{fig:causal_trajectories}, we divide each generation into 10 normalized position bins, concatenate the averaged QCD, VCD, and PCD trajectories into a 30-dimensional representation, standardize it, and perform \(k\)-means clustering~\cite{ahmed2020k} with \(k=3\). We set \(k=3\) to obtain three interpretable behavioral patterns; implementation details and stability analysis are provided in Supplementary Sec.~\ref{sec:clustering_details}.

\noindent\textbf{Cluster 1 (74\%): Prefix-driven generation with weakened visual influence \textcolor{green}{(green).}}
This is the most common generation pattern. PCD increases rapidly at the early decoding stage and remains high thereafter, while QCD gradually decreases and VCD decays from an initially positive value toward zero. This suggests that the model first uses the question to determine the answer direction, and then shows an increasingly prominent prefix-conditioned generation pattern. Visual input provides limited early support, but its influence weakens over time.

\noindent\textbf{Cluster 2 (6\%): Visual suppression with late-stage convergence \textcolor{red}{(red).}}
This smallest cluster shows a clear pattern of visual-suppression. VCD remains negative throughout generation, indicating reduced visual support for the observed generation relative to the null reference. In later stages, PCD and QCD increase, indicating that the model relies more on the generated prefix and the question to ensure coherent generation when visual grounding is unreliable.

\noindent\textbf{Cluster 3 (20\%): Multi-source coordinated generation \textcolor{blue}{(blue).}}
This cluster shows strong multimodal coupling. VCD increases during decoding and remains high, indicating that visual input significantly shapes token generation. QCD also remains high, suggesting that the question guides the selection and organization of visual evidence. Meanwhile, PCD gradually strengthens, helping to maintain coherent continuation. This pattern reflects a coordinated generation process where visual evidence supplies key content, while the question and prefix guide answer construction.

Overall, these clusters show that causal drive trajectories provide useful behavioral signatures of multimodal generation. 
They help diagnose whether a sample is mainly prefix-driven, visually grounded, or affected by visual suppression, offering guidance for targeted model improvement.

\begin{table*}[h]
\small
\centering
\caption{
Representative VQA cases illustrating information missed by
conventional answer-level metrics. Conventional evaluation describes
answer--reference agreement, whereas causal drives characterize the
information sources supporting the generation. Q2 and A2 provide
additional case-level verification of the diagnosed source reliance.
}
\label{tab:case_studies}
\setlength{\tabcolsep}{5pt}
\renewcommand{\arraystretch}{1.08}

\begin{tabularx}{\textwidth}{YYY}
\toprule

\textbf{Sample 1: Low overlap, Q/V-driven}
&
\textbf{Sample 2: Low overlap, visually grounded}
&
\textbf{Sample 3: Correct, multimodal grounded}
\\
\midrule

\begin{minipage}[t]{\linewidth}
\centering
\begin{adjustbox}
{width=\linewidth,height=0.55\linewidth,keepaspectratio,center}
\includegraphics{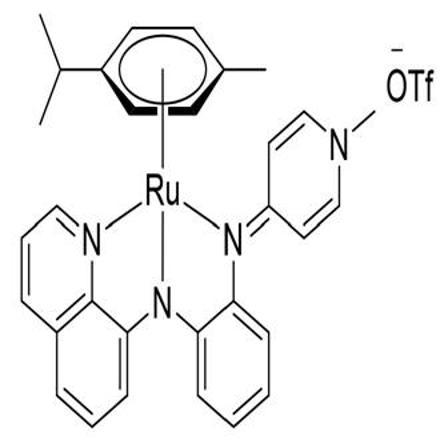}
\end{adjustbox}

\raggedright
\textbf{Q:}
What is the oxidation state of Ru when coordinated with the
donor-flexible PYE ligand?

\textbf{A:} +2.

\textbf{R:}
The oxidation state of Ru may differ when the ligand is in the
pyridine form.

\vspace{0.2em}
\noindent\rule{\linewidth}{0.4pt}

\textbf{Conventional:}
Nearly-zero reference overlap.

\textbf{Causal evidence:}
QCD = 31.75 (top 0.6\%);
VCD = 30.51 (top 0.4\%).

\vspace{0.15em}
\textbf{Verification:}

\textbf{Q2:}
\textcolor{blue}{Based on the image,}
what is the oxidation state of Ru when coordinated with the
donor-flexible PYE ligand?

\textbf{A2:} +2.

\vspace{0.15em}
\textbf{Diagnosis:}
\emph{Low-overlap but strongly question- and vision-supported.}

\end{minipage}

&

\begin{minipage}[t]{\linewidth}
\centering
\begin{adjustbox}
{width=\linewidth,height=0.55\linewidth,keepaspectratio,center}
\includegraphics{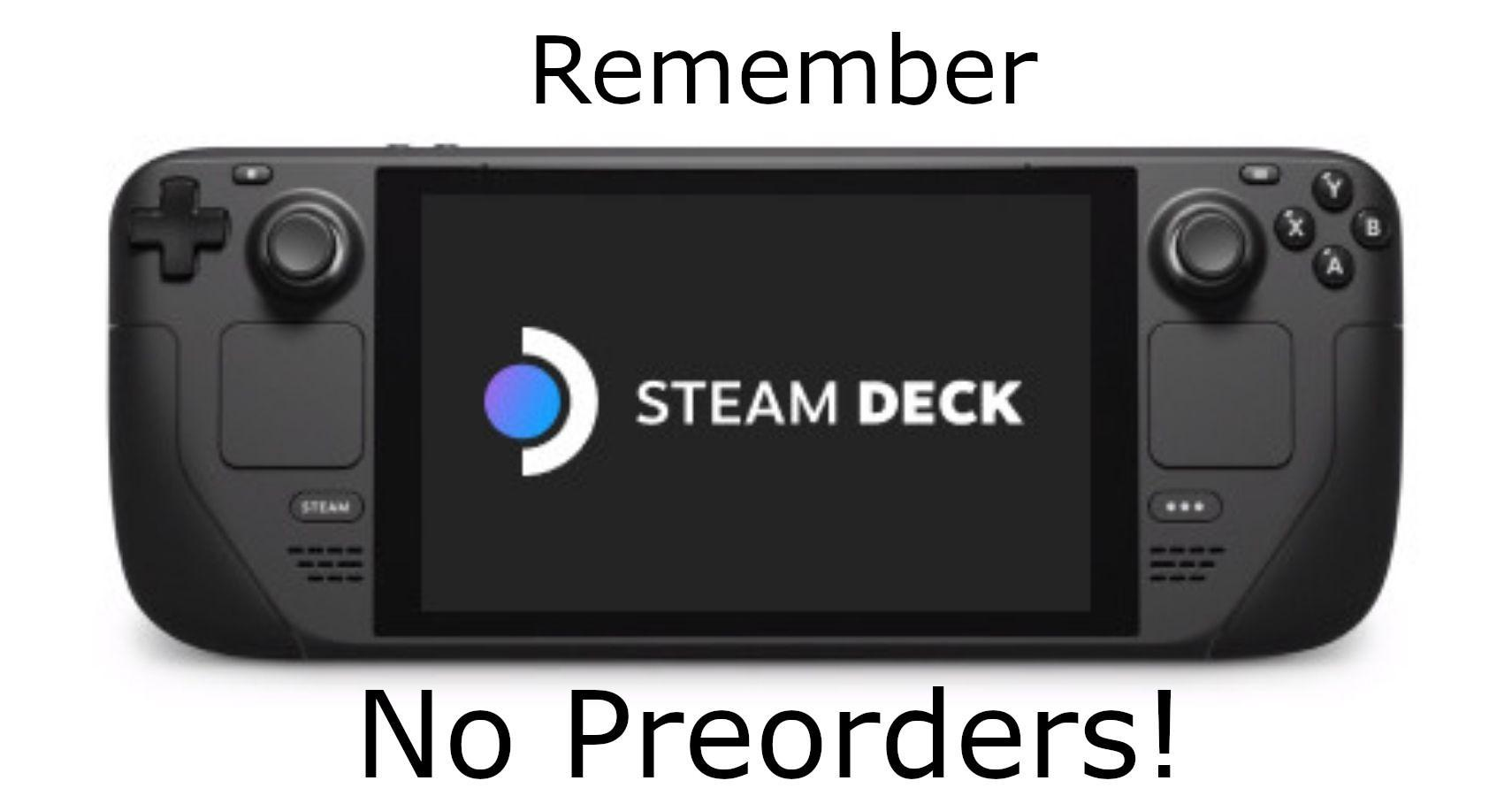}
\end{adjustbox}

\raggedright
\textbf{Q:}
Why is my account not able to place a reservation until Sunday?

\textbf{A:}
The Steam Deck does not accept preorders.

\textbf{R:}
There is an awareness of potential unauthorized resellers,
prompting additional safeguards.

\vspace{0.2em}
\noindent\rule{\linewidth}{0.4pt}

\textbf{Conventional:}
Low ROUGE-L, METEOR, and token-F1.

\textbf{Causal evidence:}
QCD = 31.10 (top 0.8\%);
VCD ranks highest among the evaluated samples.

\vspace{0.15em}
\textbf{Verification:}

\textbf{Q2:}
\textcolor{blue}{What restriction is placed on this account?}

\textbf{A2:}
The account cannot preorder items.

\vspace{0.15em}
\textbf{Diagnosis:}
\emph{Low-overlap answer with strong visual grounding.}

\end{minipage}

&

\begin{minipage}[t]{\linewidth}
\centering
\begin{adjustbox}
{width=\linewidth,height=0.55\linewidth,keepaspectratio,center}
\includegraphics{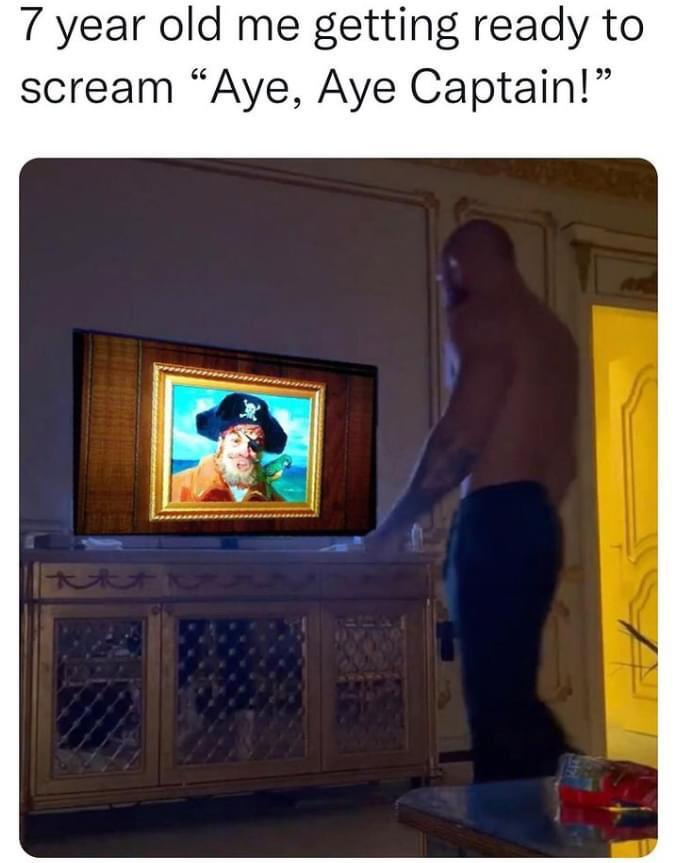}
\end{adjustbox}

\raggedright
\textbf{Q:}
Who lives in a pineapple under the sea?

\textbf{A:}
SpongeBob SquarePants.

\textbf{R:}
SpongeBob SquarePants.

\vspace{0.2em}
\noindent\rule{\linewidth}{0.4pt}

\textbf{Conventional:}
Near-perfect answer--reference agreement.

\textbf{Causal evidence:}
PCD, QCD, and VCD all lie in high percentiles,
indicating multimodal support.

\vspace{0.15em}
\textbf{Verification:}

\centering
\begin{adjustbox}
{width=\linewidth,height=0.37\linewidth,keepaspectratio,center}
\includegraphics{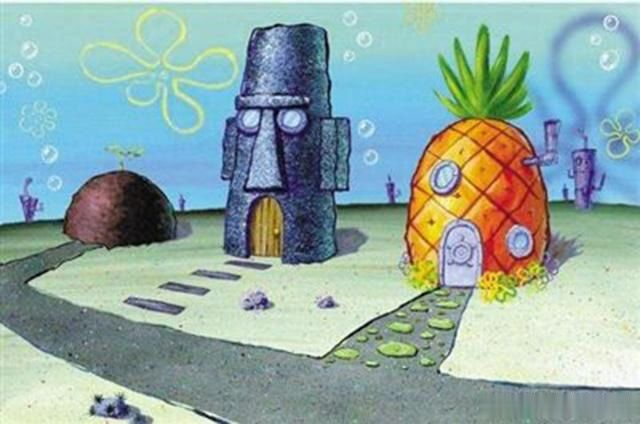}
\end{adjustbox}

\raggedright
\textbf{Q2:}
Who lives in the \textcolor{blue}{rock house} under the sea?

\textbf{A2:}
Squidward Tentacles lives in the rock house under the sea.

\textbf{Null-image A2:}
The Little Mermaid.

\vspace{0.15em}
\textbf{Diagnosis:}
\emph{Correct answer with independently verified visual reliance.}

\end{minipage}

\\
\bottomrule
\end{tabularx}

\vspace{-0.8em}
\end{table*}

\subsection{What Do Conventional Metrics Miss?}
\label{sec:case_studies}

We evaluate Qwen3-VL-8B-Instruct on the first 500 image--question
samples from MAVIS, using the first entry in the dataset's
\texttt{verifiable\_facts} field as the reference answer.
Table~\ref{tab:case_studies} presents three representative cases
to illustrate what source-level causal attribution reveals beyond
conventional answer--reference evaluation. For each case, we report
the conventional evaluation outcome, the corresponding causal-drive
evidence, and an additional query used to verify the diagnosed source
reliance.

The three cases reveal two complementary limitations of conventional
answer-level evaluation. First, low reference overlap does not necessarily
imply that a generation lacks meaningful support. In Samples~1 and~2,
conventional metrics assign low scores because the generated answers differ
from the references, whereas the high QCD/VCD values indicate strong
dependence on the question and visual evidence. The additional queries
preserve the diagnosed behavior, supporting the source attribution reported
by the causal drives.

Second, even when conventional metrics correctly identify a good answer,
they do not explain \emph{why} the answer is produced. Sample~3 matches the
reference, but the causal drives further reveal strong multimodal support.
Its response changes consistently when the visual context is changed, while
the null-image condition produces an unrelated answer, providing additional
evidence of visual reliance.

Overall, conventional metrics primarily characterize \emph{whether} a
generated answer agrees with a reference, whereas causal drives characterize
\emph{which information sources} support that answer during generation.
The two forms of evaluation are therefore complementary: outcome metrics
measure answer quality, while causal drives provide source-level diagnostic
information that is not visible from answer--reference agreement alone.

\subsection{Validating Causal Attribution}
\label{sec:causal_validation}

\paragraph{Source-Removal Sanity Check.}
\begin{figure}[h]
    \centering
    \includegraphics[width=0.9\linewidth]{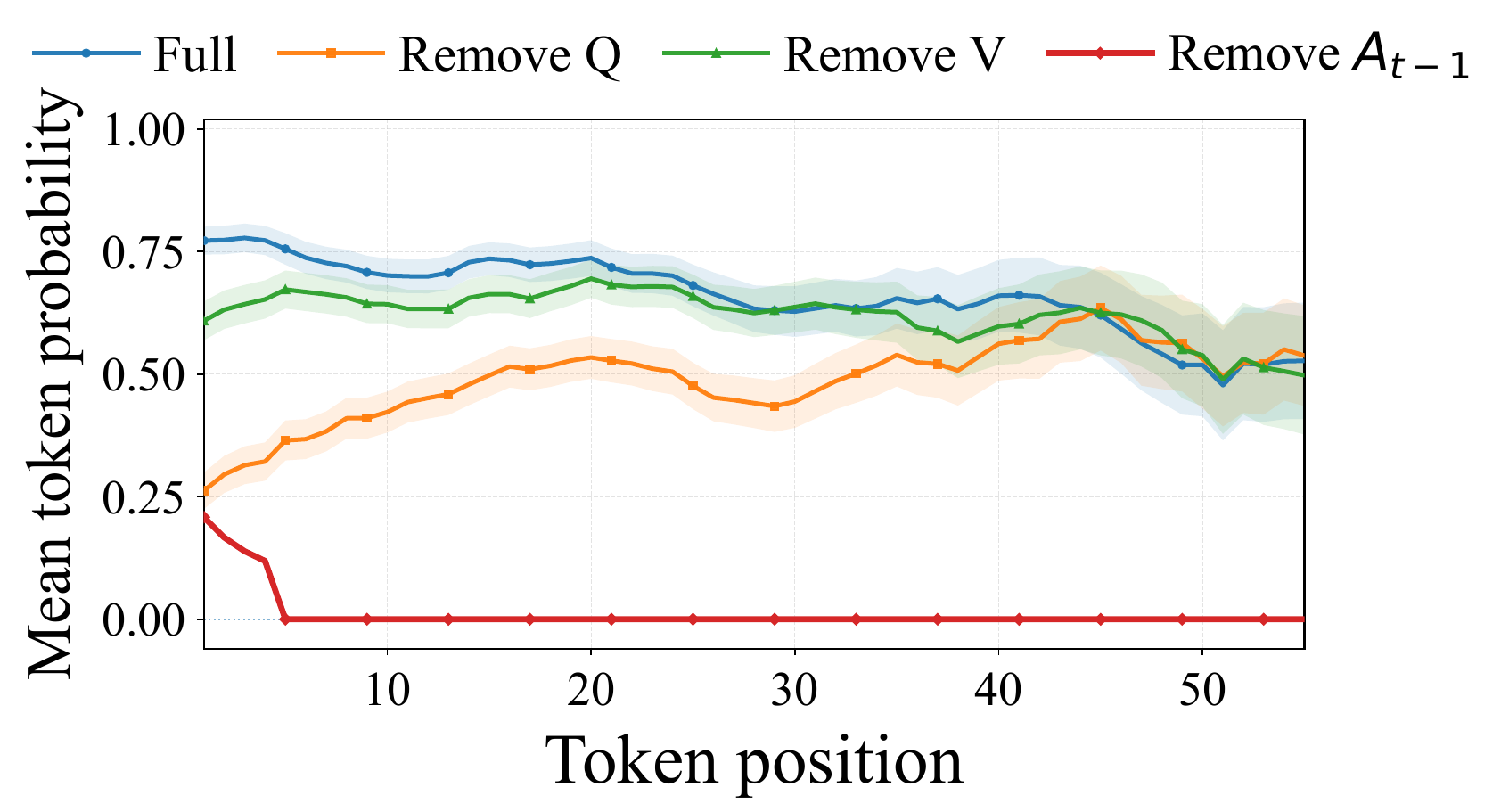}
    \caption{
    Mean token probabilities across positions under source-removal ablation.
    }
    \label{fig:source_removal_probability}
    \vspace{-0.6em}
\end{figure}

\begin{table}[t]
    \centering
    \caption{
    Source-removal ablation results.
    $P_{\mathrm{full}}$ and $P_{\mathrm{remove}}$ denote the mean token
    probabilities before and after removing each source, respectively.
    Fraction decreased denotes the proportion of samples whose probability decreases
    after source removal.
    }
    \label{tab:source_removal_sanity}

    \small
    \setlength{\tabcolsep}{6pt}
    \renewcommand{\arraystretch}{1.15}

    \begin{tabular}{lccc}
        \toprule
        \textbf{Removed}
        & $\boldsymbol{P_{\mathrm{full}}}$
        & $\boldsymbol{P_{\mathrm{remove}}}$
        & \textbf{Fraction Decreased} \\
        \midrule
        $Q$
        & \multirow{3}{*}{0.7167}
        & 0.4384
        & 99\% \\

        $V$
        &
        & 0.6541
        & 73\% \\

        $A_{<t}$
        &
        & 0.0323
        & 100\% \\
        \bottomrule
    \end{tabular}

    \vspace{-0.6em}
\end{table}

To examine whether the estimated causal drives align with the model's sensitivity to different information sources, we perform a source-removal sanity check.
We fix the original generated sequence and, under teacher forcing, remove the question \(Q\), visual input \(V\), and generated prefix \(A_{<t}\), respectively.
As shown in Figure~\ref{fig:source_removal_probability} and Table~\ref{tab:source_removal_sanity}, the mean token probability under the full condition is 0.7167; after removing \(Q\), \(V\), and \(A_{<t}\), it decreases to 0.4384, 0.6541, and 0.0323, with token probabilities decreasing for 99\%, 73\%, and 100\% of the samples, respectively.
These results show that removing each source changes the probability of the original generation trajectory, with particularly strong sensitivity to the generated prefix and question information.

\paragraph{Randomized Intervention Recovery.}
To further examine whether the estimated causal drives agree with model behavior under active source interventions, we construct an independent held-out randomized intervention reference. This complements the source-removal sanity check: source removal tests whether an information source supports the original trajectory, whereas randomized intervention evaluates whether the estimated drive can recover model behavior when that source is actively reassigned.

For each target generation, we sample $M=100$ contexts from held-out MAVIS examples excluded from the original $K=20$ support set. For question intervention, we fix the target question and randomly reassign visual contexts, thereby breaking the natural image--question pairing. For prefix intervention, we fix the generated prefix and evaluate subsequent tokens under randomly sampled observed image--question pairs. The original generation trajectory is preserved and evaluated with teacher forcing.

As observational baselines, we use $\mathrm{PMI}_Q$ and $\mathrm{PMI}_P$ to characterize the statistical dependence of the question and prefix on the generated output. Unlike QCD and PCD, which estimate interventional effects after causal adjustment, these PMI counterparts retain the dependencies in the observational distribution. Their definitions are given in Eqs.~\eqref{eq:pmi_q} and~\eqref{eq:pmi_p} of the supplementary material.

Let $G(t)$ denote the held-out randomized intervention reference and $D(t)$ denote either the corresponding causal drive or PMI baseline. We measure recovery error as
\begin{equation}
    \mathrm{Err}(D)
    =
    \frac{1}{T}\sum_{t=1}^{T}
    \left|D(t)-G(t)\right|.
    \label{eq:intervention_recovery_error}
\end{equation}
A lower error indicates closer agreement with model behavior under randomized intervention. The randomized reference is not treated as exact causal ground truth but as a large-support approximation of the intervention of interest.

Figure~\ref{fig:intervention_recovery} compares recovery errors over relative token positions. Causal drives remain consistently closer to the randomized intervention reference throughout generation. Across 50 target generations covering all ten original $K=20$ support blocks, QCD reduces the average recovery error from 7.822 bits for $\mathrm{PMI}_Q$ to 5.434 bits, corresponding to a 34.8\% reduction. PCD reduces the error from 3.660 bits for $\mathrm{PMI}_P$ to 2.019 bits, a 47.1\% reduction. QCD achieves lower error on 98\% of targets, while PCD does so on all targets. These results show that causal drives more faithfully recover source influence under randomized interventions than observational PMI.
\begin{figure}[t]
    \centering
    \begin{subfigure}[t]{0.49\linewidth}
        \centering
        \includegraphics[width=\linewidth]{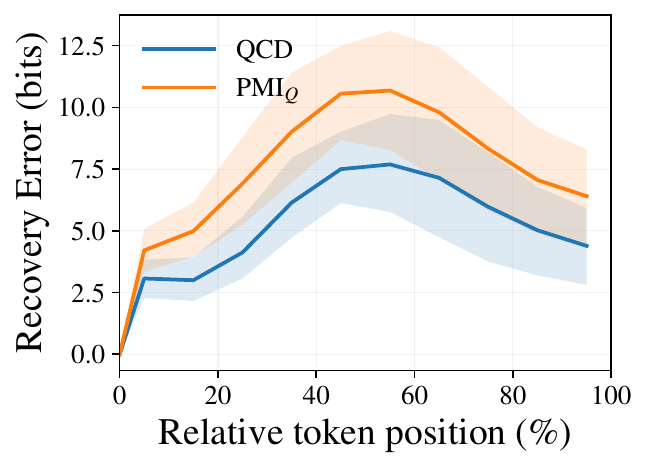}
        \caption{Question.}
        \label{fig:question_intervention_recovery}
    \end{subfigure}
    \hfill
    \begin{subfigure}[t]{0.49\linewidth}
        \centering
        \includegraphics[width=\linewidth]{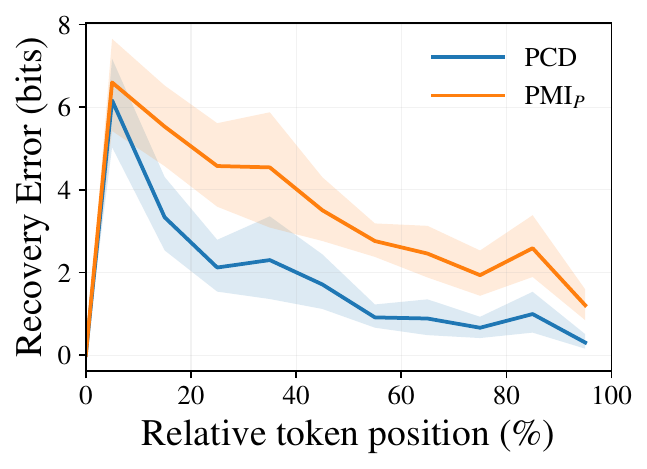}
        \caption{Prefix.}
        \label{fig:prefix_intervention_recovery}
    \end{subfigure}

    \vspace{-0.4em}
    \caption{Recovery error to held-out randomized intervention references over relative token positions. Lower is better. Shaded regions denote 95\% block-bootstrap confidence intervals.}
    \label{fig:intervention_recovery}
    \vspace{-0.8em}
\end{figure}
\paragraph{Complementarity to Conventional Metrics.} Causal drives show weak overall correlations with conventional metrics, suggesting they capture different signals.
QCD and VCD are weakly positively correlated with BERT-F1\cite{zhang2019bertscore} and BLEU-4\cite{papineni2002bleu}, and weakly negatively correlated with TER\cite{snover2006study}. 
The Spearman correlations of QCD with BERT-F1 and TER are 0.22 and -0.27, while those of VCD are 0.19 and -0.26. 
In contrast, PCD is nearly uncorrelated with conventional metrics, indicating it mainly reflects autoregressive inertia rather than final answer quality. 
Therefore, causal drives are not replacements for conventional metrics, but instead provide complementary process-level signals about the generation process that conventional metrics alone cannot capture.

\subsection{Validating Source-level Attribution}
\label{sec:vlmbias}

Although causal drives characterize how visual evidence, question text, and
historical prefixes influence generation, whether these 
attributions correspond to meaningful model behaviors remains to be verified.
Therefore, we further evaluate our framework on VLMBias, which provides
generation samples with different visual reliance behaviors.

We focus on two categories: \emph{prior-driven} and \emph{visually grounded}
generations. Prior-driven generations are consistent with the language-prior
answer rather than the visual evidence, whereas visually grounded generations
are supported by the visual content. Since our goal is to analyze information sources during
generation rather than evaluate final answer correctness, this experiment
serves as an external validation of source attribution.
We evaluate 200 valid examples, including 112 prior-driven, 58 visually grounded, and 30 other-error generations; the binary classification results below use the 170 examples from the first two categories.

For each sample, we measure the relative dependence between generated prefixes
and visual evidence using the Prefix-Visual imbalance score:
\[
S_{PV}=\overline{PCD}-\overline{VCD},
\]
where $\overline{PCD}$ and $\overline{VCD}$ denote the averaged prefix and
visual causal drives. A larger $S_{PV}$ indicates stronger reliance on
historical generation contexts relative to visual evidence.

\begin{table}[t]
\centering
\caption{Source-level attribution validation on VLMBias.}
\label{tab:vlmbias}
\begin{tabular}{lcc}
\toprule
Metric & AUROC & AUPRC \\
\midrule
Answer-level VCD & 0.747 & 0.842 \\
Prefix-Visual imbalance ($S_{PV}$) & \textbf{0.767} & \textbf{0.873} \\
\bottomrule
\end{tabular}
\end{table}

Table~\ref{tab:vlmbias} reports the classification performance between the two
generation patterns. Answer-level VCD already provides meaningful signals
(\(\mathrm{AUROC}=0.747\), \(\mathrm{AUPRC}=0.842\)), while combining prefix and visual contributions with
$S_{PV}$ further improves the performance (\(\mathrm{AUROC}=0.767\), \(\mathrm{AUPRC}=0.873\)).
On the same 50 VLMBias samples, Qwen3-VL-32B-Instruct preserves the
source-level polarity observed in the 8B model, while exhibiting model-specific
temporal trajectories.
These results demonstrate that causal drives capture source-level differences
consistent with observed generation behaviors, providing an interpretable view
of how VLMs utilize multimodal information during generation. Additional confidence intervals and effect-size analyses are reported in Supplementary Sec.~\ref{sec:vlmbias_details}.

\section{Conclusion}
In this paper, we introduce a causal and temporal evaluation framework for tracing how visual evidence, question text, and generated history shape VLM generation. By modeling autoregressive decoding with an SCM, we derive PCD, QCD, and VCD as step-indexed causal-drive metrics that require no reference answers. Experiments on Qwen3-VL-8B-Instruct across MAVIS, LLaVA-Video-178K, and MiraData, along with cross-model validation on InternVL2-8B, reveal a consistent temporal transition from stronger early question and visual guidance to increasing reliance on generated prefixes. Randomized-intervention validation shows that QCD and PCD reduce recovery error over observational PMI baselines by 34.8\% and 47.1\%, respectively. External validation on VLMBias further shows that the prefix--visual imbalance score achieves 0.767 AUROC and 0.873 AUPRC in distinguishing prior-driven from visually grounded generations. Overall, causal-drive trajectories complement conventional answer-level evaluation with source-level diagnostics for multimodal generation.

\paragraph{Limitations.}
The proposed drives diagnose source reliance rather than replace correctness metrics: strong visual drive indicates probability support from visual evidence, not necessarily correct visual understanding. Their causal interpretation is also conditioned on the assumed SCM, fixed model parameters during evaluation, and the absence of unobserved sample-level common causes beyond the adjusted variables. Finally, our experiments focus on the evaluated model families, scales, and generation settings; chain-of-thought-style long reasoning remains an important direction for future validation.

{
    \small
    \IfFileExists{ieeenat_fullname.bst}{%
        \bibliographystyle{ieeenat_fullname}%
    }{%
        \bibliographystyle{plainnat}%
    }
    \bibliography{main}
}
\clearpage
\maketitlesupplementary

\section{Proofs of Causal Effects}
\label{proofs of causal effects}
The proof of Eq.~\eqref{eq:prefix_effect}, the causal effect of $A_{<t}$ on $A_t$, is as follows:
\begin{proof}
Here, $P(A_t=a_t \mid do(A_{<t}=a_{<t}))$ denotes the interventional probability of the current token after fixing the generated prefix. To identify this quantity, we block the backdoor paths induced by $Q$ and $V$, thereby removing their confounding effects on the relation between $A_{<t}$ and $A_t$.

To implement the do-operator \(do(A_{<t}=a_{<t})\), we cut off the original generative mechanism that determines \(A_{<t}\) and replace it with the fixed assignment \(A_{<t}=a_{<t}\). After the intervention, the prefix no longer depends on its upstream variables. Conditioned on this fixed prefix, generation of the current token \(A_t\) depends on \(Q\), \(V\), and \(a_{<t}\). The intervention therefore removes the spurious dependence induced by the backdoor paths \(A_{<t} \leftarrow Q \rightarrow A_t\) and \(A_{<t} \leftarrow V \rightarrow A_t\). Marginalizing over the empirical evaluation distribution $P(V=v)P(Q=q\mid V=v)$ yields Eq.~\eqref{eq:prefix_effect}.
\end{proof}

The proof of Eq.~\eqref{eq:question_effect}, the causal effect of $Q$ on $A_{\le t}$, is as follows:
\begin{proof}
This quantity represents the interventional effect of the question condition \(Q\) on the current answer \(A_{\le t}\).
Under the intervention \(do(Q=q)\), the incoming edge \(V\rightarrow Q\) is removed, and \(Q\) is directly fixed to \(q\), no longer depending on its original upstream mechanism.
Since \(V\) is a common cause of \(Q\) and \(A_{\le t}\), it confounds the observational relation between them. Therefore, we adjust over the empirical evaluation distribution of \(V\) to block the backdoor path \(Q \leftarrow V \rightarrow A_{\le t}\), yielding Eq.~\eqref{eq:question_effect}.
\end{proof}

The proof of Eq.~\eqref{eq:visual_effect}, the visual-source effect on $A_{\le t}$ with the question fixed to the null prompt, is as follows:
\begin{proof}
The joint intervention fixes the visual variable to \(v\) and the question to the null prompt \(\varnothing\). Intervening on $Q$ removes its incoming edge from $V$, while $V$ has no parent in the proposed SCM. Under this graph and its assumed absence of omitted common causes, the truncated factorization therefore reduces the interventional distribution to the model's conditional generation distribution:
\[
\begin{aligned}
&P(A_{\le t}=a_{\le t}\mid do(V=v),do(Q=\varnothing))\\
&\qquad=P(A_{\le t}=a_{\le t}\mid V=v,Q=\varnothing).
\end{aligned}
\]
Here, $Q=\varnothing$ is implemented as a null model input rather than estimated from naturally occurring null-question samples. The right-hand side is evaluated directly by teacher-forced model scoring and isolates the visual support available for the observed answer trajectory when question information is absent.
\end{proof}

\section{Proofs of Causal Drives}
\label{proof of causal drives}
All three causal drives use the shared null-source reference introduced in Eq.~\eqref{eq:null_reference}:
\[
P_0(A_{\le t}=a_{\le t})
=P(A_{\le t}=a_{\le t}\mid V=\varnothing,Q=\varnothing).
\]

\subsection{Prefix Causal Drive (PCD)}
The Prefix Causal Drive (PCD) at step \(t\) is defined as
\[
\mathrm{PCD}_t
=\log_2
\frac{P(A_t=a_t\mid do(A_{<t}=a_{<t}))}
{P_0(A_{\le t}=a_{\le t})}.
\]
This definition compares the interventional probability of the current token after fixing the observed history with the shared null-source likelihood. A larger value indicates stronger prefix-conditioned generation at decoding step $t$.
Under the intervention $do(A_{<t}=a_{<t})$, the event $A_{<t}=a_{<t}$ has probability one. Thus, the numerator is equivalently $P(A_{\le t}=a_{\le t}\mid do(A_{<t}=a_{<t}))$, which makes the token-level PCD definition comparable to the sequence-level null reference.

Based on Eq.~\eqref{eq:prefix_effect}, the Prefix Causal Drive admits the following computable form.
\begin{theorem}
PCD can be computed as follows:
\begin{align}
&\mathrm{PCD}_t \\
&=\log_2
\frac{
\begin{aligned}
&\sum_{v,q} P(V=v)P(Q=q\mid V=v)\\
&\quad\times P(A_t=a_t\mid A_{<t}=a_{<t},V=v,Q=q)
\end{aligned}
}{P_0(A_{\le t}=a_{\le t})}.
\end{align}
\end{theorem}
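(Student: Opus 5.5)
The plan is to obtain the computable form by substitution: replace the interventional numerator in the definition of $\mathrm{PCD}_t$ (Eq.~\eqref{eq:pcd_do}) by its identified expression from Eq.~\eqref{eq:prefix_effect}. The denominator $P_0(A_{\le t}=a_{\le t})$ is an ordinary conditional probability under the null-source input, so it stays unchanged. All the content therefore lies in justifying Eq.~\eqref{eq:prefix_effect} within the SCM of Figure~\ref{fig:scm}.

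\textbf{Step 1: identify the adjustment set.} Identify the set $\{V,Q\}$ and check that it satisfies the backdoor criterion relative to the pair $(A_{<t},A_t)$.

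First, neither $V$ nor $Q$ is a descendant of $A_{<t}$, since every edge out of $V$ and $Q$ points into $Q$ or into the answer nodes. Second, every backdoor path from $A_{<t}$ to $A_t$ starts with an arrow into $A_{<t}$, so its first edge comes from a parent of $A_{<t}$. Within the answer chain, the earlier answer tokens are themselves driven only by $V$, $Q$ and earlier tokens. A backdoor path must therefore pass through $V$ or $Q$ as a non-collider, for example $A_{<t}\leftarrow Q\rightarrow A_t$, $A_{<t}\leftarrow V\rightarrow A_t$, or $A_{<t}\leftarrow Q\leftarrow V\rightarrow A_t$. Conditioning on $\{V,Q\}$ blocks all of them.

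\textbf{Step 2: apply the adjustment formula.} The backdoor adjustment formula then gives
\[
P(A_t=a_t\mid do(A_{<t}=a_{<t}))=\sum_{v,q}P(V=v,Q=q)\,P(A_t=a_t\mid A_{<t}=a_{<t},V=v,Q=q).
\]

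\textbf{Step 3: factor the joint distribution.} Write $P(V=v,Q=q)=P(V=v)P(Q=q\mid V=v)$. This follows from the chain rule together with the graph: $V$ is a root node, and $Q$ has $V$ as its only parent. Substituting this factorization into Eq.~\eqref{eq:pcd_do}, and noting that $\log_2$ of the ratio is unaffected, yields the displayed formula.

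\textbf{Step 4: interpret the conditional term.} The conditional $P(A_t=a_t\mid A_{<t}=a_{<t},V=v,Q=q)$ is exactly the model's teacher-forced next-token probability. This is what makes the expression computable, and the empirical estimation of $P(V)$ and $P(Q\mid V)$ is the only approximation involved.

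\textbf{Main obstacle.} The hardest part is Step 1, because $A_{<t}$ is a composite node made of earlier tokens that themselves have parents $V$ and $Q$. The argument must confirm that no backdoor path escapes through an intermediate token $A_s$ with $s<t$. I would handle this by treating $A_{<t}$ as a single block intervened on jointly, so that all its internal edges are cut and its external parents are exactly $V$ and $Q$.

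A secondary point is that the adjustment requires positivity, $P(A_{<t}=a_{<t},V=v,Q=q)>0$ on the support. This holds because the model's softmax assigns positive probability to every token sequence.
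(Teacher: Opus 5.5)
Your proposal is correct and follows the same route as the paper. You substitute the backdoor-adjusted prefix effect of Eq.~\eqref{eq:prefix_effect} (adjusting over $\{V,Q\}$ with weights $P(V=v)P(Q=q\mid V=v)$) into the definition of $\mathrm{PCD}_t$, leaving the null-source denominator unchanged, and your explicit backdoor-criterion check, treating $A_{<t}$ as a jointly intervened block, fills in detail the paper's supplementary proof states only informally (the factorization $P(V=v,Q=q)=P(V=v)P(Q=q\mid V=v)$ itself needs only the chain rule, not the graph).
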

This expression follows by substituting the backdoor-adjusted prefix effect from Eq.~\eqref{eq:prefix_effect} into the PCD definition. It characterizes the role of the observed generation history at the current decoding step while marginalizing over visual and question contexts.

\subsection{Question Causal Drive (QCD)}
The Question Causal Drive (QCD) at step \(t\) is defined as
\[
\begin{aligned}
&\mathrm{QCD}(A_{\le t}=a_{\le t}\mid Q=q)\\
&=\log_2
\frac{P(A_{\le t}=a_{\le t}\mid do(Q=q))}
{P_0(A_{\le t}=a_{\le t})}.
\end{aligned}
\]
This definition compares the generation probability under the question intervention with the shared null-source likelihood.

QCD can be computed as follows:
\begin{align}
&\mathrm{QCD}(A_{\le t}=a_{\le t}\mid Q=q) \\
&=\log_2
\frac{
\begin{aligned}
&\sum_v P(V=v)\\
&\quad\times P(A_{\le t}=a_{\le t}\mid V=v,Q=q)
\end{aligned}
}{P_0(A_{\le t}=a_{\le t})}.
\end{align}

Expanding the numerator using the autoregressive chain rule yields
\begin{align}
&\mathrm{QCD}(A_{\le t}=a_{\le t}\mid Q=q) \\
&=\log_2
\frac{
\begin{aligned}
&\sum_v P(V=v)P(A_1=a_1\mid V=v,Q=q)\\
&\quad\times\prod_{i=2}^{t}
P(A_i=a_i\mid A_{<i}=a_{<i},V=v,Q=q)
\end{aligned}
}{P_0(A_{\le t}=a_{\le t})}.
\end{align}

\subsection{Visual Causal Drive (VCD)}
The Visual Causal Drive (VCD) at step \(t\) is defined as
\[
\begin{aligned}
&\mathrm{VCD}(A_{\le t}=a_{\le t}\mid V=v)\\
&=\log_2
\frac{
P(A_{\le t}=a_{\le t}\mid do(V=v),do(Q=\varnothing))
}{P_0(A_{\le t}=a_{\le t})}.
\end{aligned}
\]
This definition measures visual support relative to the shared null-source likelihood while holding the question at the null prompt. According to Eq.~\eqref{eq:visual_effect}, the numerator is identified by $P(A_{\le t}=a_{\le t}\mid V=v,Q=\varnothing)$ and can be factorized using the autoregressive chain rule.

VCD can be computed as follows:
\begin{align}
&\mathrm{VCD}(A_{\le t}=a_{\le t}\mid V=v) \\
&=\log_2
\frac{
\begin{aligned}
&P(A_1=a_1\mid V=v,Q=\varnothing)\\
&\quad\times\prod_{i=2}^{t}
P(A_i=a_i\mid A_{<i}=a_{<i},V=v,Q=\varnothing)
\end{aligned}
}{P_0(A_{\le t}=a_{\le t})}.
\end{align}
VCD is cumulative over the answer prefix through step $t$ and traces the support provided by visual evidence when question information is absent.

\section{Computational Complexity and Resource Overhead}
\label{compute resources}
Our causal and temporal evaluation framework computes three causal drives—PCD, QCD, and VCD—by evaluating per-token probabilities under different source interventions.  
The computational complexity per sample scales with sequence length $T$ and the number of interventions: PCD is the most expensive at $O(|V| \cdot |Q| \cdot T)$ due to summing over visual inputs and questions, QCD requires $O(|V| \cdot T)$, and VCD is the lightest at $O(T)$.  

Memory overhead is dominated by storing per-token logits or probabilities for each intervention, which grows with vocabulary size, sequence length, and the number of visual or question conditions. In practice, batching, caching repeated computations, and sampling over large visual or question sets are used to keep GPU memory usage manageable.  
Overall, while PCD is the most computationally intensive, the framework remains feasible for typical VLM evaluation workloads with standard GPU hardware.

\subsection{Finite-support Estimation}
\label{sec:finite_support}
Before computing causal drives, we preprocess the evaluation split to estimate the empirical weights \(P(V)\) and \(P(Q\mid V)\). For QCD, we fix the target question \(q\), evaluate the same generated sequence under each sampled visual context \(v\) with teacher forcing, and aggregate the resulting probability trajectories using \(P(V=v)\). For PCD, we additionally traverse sampled \((v,q)\) pairs and aggregate with \(P(V=v)P(Q=q\mid V=v)\). The preprocessing cost is incurred once; repeated teacher-forced scoring under different contexts dominates the runtime.

We also examine the stability of finite-support approximation. For 8 target generations, we sample two independent supports and evaluate \(K\in\{4,8,16\}\), using the \(K=16\) support as the largest finite-support reference. This creates 48 target-support configurations and 4096 conditional probability trajectories. With \(K=4\), QCD and PCD reach average Pearson correlations of 1.000 and 0.997 with the \(K=16\) reference. With \(K=8\), the correlations are 1.000 and 0.999. Across independent support samplings, QCD correlations remain 1.000 for all support sizes, while PCD correlations are 0.999, 0.998, and 0.998 for \(K=4,8,16\), respectively.

\section{Reference and Implementation Sensitivity}
\label{sec:reference_sensitivity}
The null-source likelihood \(P_0(A_{\le t})\) is a shared reference condition rather than an estimate of the data distribution. Since QCD, VCD, and PCD use the same reference term at a fixed decoding step, the denominator calibrates absolute scale. To test whether the main trajectories depend on the concrete reference construction, we replace the null image with a black image. The resulting QCD, VCD, and PCD trajectories achieve Pearson correlations of 0.997, 0.991, and 0.998 with the original setting. Under a mismatched-image reference, QCD and PCD correlations remain 0.971 and 0.980.

We further test prompt sensitivity by replacing the empty prompt with the task-agnostic prompt ``Please provide a response.'' The sample-wise Pearson correlations with the original setting are 0.962 for QCD and 0.977 for PCD. For VCD, the empty-prompt and matched-prompt versions reach an average sample-wise Pearson correlation of 0.971, with 95\% CI \([0.965,0.976]\).

\section{Clustering Details}
\label{sec:clustering_details}
For each sample, we normalize token positions to relative decoding positions and divide the full generation into 10 equal bins. Within each bin, we average QCD, VCD, and PCD separately, then concatenate the three trajectories into
\[
\mathbf z_i=[
\mathrm{QCD}_{i,1:10},
\mathrm{VCD}_{i,1:10},
\mathrm{PCD}_{i,1:10}
].
\]
This yields a 30-dimensional representation for each generation. We standardize these representations and run \(k\)-means clustering. We set \(k=3\) because the resulting groups correspond to interpretable generation modes: prefix-dominant generation, visual suppression, and multi-source coordination. Additional checks with adjacent \(k\) values and different random initializations preserve the main behavior patterns, indicating that the reported clusters are not driven by a single unstable initialization.

\section{VLMBias Validation Details}
\label{sec:vlmbias_details}
VLMBias provides labels that distinguish prior-driven behavior from visually grounded behavior. We evaluate 200 valid samples: 112 prior-driven generations, 58 visually grounded generations, and 30 other-error generations. The binary validation uses the 170 prior-driven and visually grounded samples.

The Prefix-Visual imbalance score \(S_{PV}=\overline{\mathrm{PCD}}-\overline{\mathrm{VCD}}\) achieves \(\mathrm{AUROC}=0.767\), with 95\% CI \([0.691,0.839]\), and \(\mathrm{AUPRC}=0.873\), with 95\% CI \([0.826,0.916]\). Answer-level VCD achieves \(\mathrm{AUROC}=0.747\), with 95\% CI \([0.667,0.824]\), and \(\mathrm{AUPRC}=0.842\), with 95\% CI \([0.784,0.901]\). The mean answer-level VCD values are \(-7.84\) for prior-driven generations and \(-5.63\) for visually grounded generations, yielding a difference of \(-2.21\), with 95\% CI \([-2.97,-1.42]\). The effect sizes are also substantial, with Cohen's \(d=-0.889\) and Cliff's delta \(-0.495\). These results show that weaker visual drive is associated with prior-driven generation behavior.

\subsection{Scaling from 8B to 32B}

We evaluate Qwen3-VL-32B-Instruct on the same 50 VLMBias samples used for
Qwen3-VL-8B-Instruct. Across all 50 samples, the source-level polarity is
fully preserved: QCD and PCD remain positive, whereas VCD remains negative.
The four-step mean VCD trajectories are highly consistent across scales
($r=0.992$), while PCD shows moderate agreement ($r=0.753$). In contrast,
the detailed QCD trajectories differ across scales, with a mean-trajectory
correlation of $r=0.295$. These results indicate that some source-level
patterns remain stable across model scales, while their temporal evolution
can be model-dependent.

\section{PMI Baseline Definitions}
\label{sec:pmi_baselines}
We use Pointwise Mutual Information (PMI) \citep{xubenchmarking} as an observational baseline for the question and prefix effects. VCD remains an interventional causal quantity; under the assumed SCM, however, $V$ is a root node and its causal effect is identified through truncated factorization without backdoor adjustment. Because the randomized-intervention recovery experiment is designed to assess the benefit of causal adjustment over observational dependence, we define PMI baselines only for QCD and PCD, whose effects require such adjustment:
\begin{align}
\mathrm{PMI}_{Q} = \log_{2}\frac{P(A_t\mid A_{<t},V,Q)}{P(A_t\mid A_{<t},V)},
\label{eq:pmi_q}
\end{align}
\begin{align}
\mathrm{PMI}_{P} = \log_{2}\frac{P(A_t\mid A_{<t},V,Q)}{P(A_t\mid V,Q)}.
\label{eq:pmi_p}
\end{align}
These quantities characterize conditional statistical dependence and serve as the comparison baselines in the randomized-intervention recovery experiment.


\end{document}